\documentclass{article} %
\usepackage{iclr2027_conference,times}

\usepackage{amsmath,amsfonts,bm}

\usepackage{hyperref}
\usepackage{url}

\usepackage{graphicx}
\usepackage{xcolor}
\usepackage{colortbl}
\definecolor{catgray}{gray}{0.92}
\usepackage{booktabs}
\usepackage{multirow}
\usepackage{amsmath}
\usepackage{amssymb}
\usepackage{amsthm}
\usepackage{enumitem}
\usepackage{tcolorbox}
\tcbuselibrary{breakable}
\DeclareUnicodeCharacter{2011}{\mbox{-}}
\newtcolorbox{promptbox}{
  breakable,
  colback=white,
  colframe=black,
  boxrule=0.6pt,
  arc=3mm,
  left=8pt,
  right=8pt,
  top=8pt,
  bottom=8pt,
  before skip=8pt,
  after skip=12pt
}

\AddToHook{env/table/begin}{%
  \setlength{\belowcaptionskip}{6pt}%
}

\theoremstyle{plain}
\newtheorem{proposition}{Proposition}

\title{RepForcing: Representation-Space Initialization\\
for Autoregressive Video Distillation}

\author{%
\makebox[\textwidth][c]{%
\textbf{Siming Fu}$^{1}$\thanks{Equal Contribution.}\hspace{0.8em}%
\textbf{Zheming Fu}$^{1}$\footnotemark[1]\hspace{0.8em}%
\textbf{Lianrui Mu}$^{1}$\hspace{0.8em}%
\textbf{Haoyang Huang}$^{1}$\hspace{0.8em}%
\textbf{Chaojie Gu}$^{2}$\hspace{0.8em}%
\textbf{Ruizhe He}$^{2,*}$\thanks{Corresponding Author.}%
}\\[0.5em]
{\small $^{1}$Joy Future Academy \quad $^{2}$Zhejiang University}%
}

\makeatletter
\renewcommand{\@maketitle}{%
  \vbox{\hsize\textwidth
    \centering
    {\normalfont\Large\bfseries\@title\par}%
    \vskip 0.8em
    {\normalfont\normalsize\@author\par}%
    \vskip 0.3in minus 0.1in
  }%
}
\makeatother

\iclrfinalcopy
\begin{document}

\maketitle
\lhead{}

\begin{figure}[ht]
\vspace{-16pt}
\begin{center}
	\includegraphics[width=0.97\linewidth]{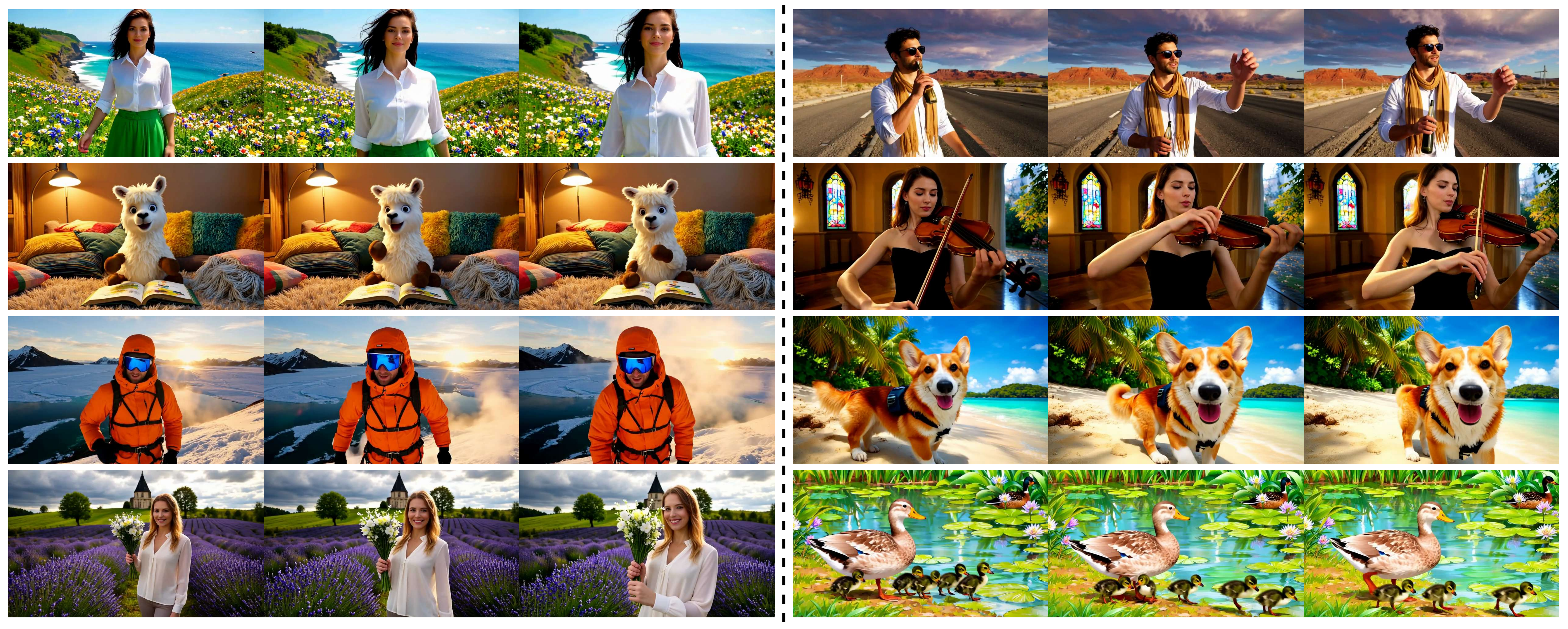}
\end{center}
\caption{\textbf{High-Quality Videos Generated by RepForcing.}
The examples span human activities, animal behavior, and stylized scenes,
showing details such as clothing folds, fur, and vegetation. Across the
displayed frames, subjects change pose and position while retaining their
appearance and surrounding scene structure.}
\label{teaser}
\end{figure}

\begin{abstract}
Distilling pretrained video diffusion models into few-step autoregressive (AR)
generators typically combines student initialization with distribution matching
distillation (DMD). An intermediate AR teacher can provide causally compatible
targets for ODE initialization.
However, causally compatible targets alone do not ensure high-quality generation
under a short sampling schedule. We analyze the posterior-mean predictions
favored by direct latent regression and empirically find that causal ODE
initialization alleviates blur but still leaves fine detail unresolved.
To address this limitation, we propose \textbf{\textit{Representation Forcing}},
which learns few-step causal initialization through representation-space
supervision rather than intermediate-teacher trajectory regression.
The student is trained with teacher forcing by comparing clean predictions
and training targets through normalized features of a frozen video diffusion
transformer. This phase reuses the synthetic videos generated by a bidirectional
model for AR teacher training and replaces both teacher training and causal ODE
distillation, without generating additional causal ODE trajectories. The subsequent asymmetric DMD procedure remains unchanged.
Empirical results show that, at four sampling steps, RepForcing better preserves
generation quality than intermediate-teacher-based initialization before DMD.
After DMD, our four-step chunk-wise model
improves VBench Total from \textbf{\textit{84.04 to 84.66 (+0.62 points)}} and
VisionReward from \textbf{\textit{6.326 to 10.074 (+59.25\%)}} over the reported
causal-ODE-based baseline. Code at \href{https://github.com/LMIUD/RepForcing}{RepForcing}.
\end{abstract}

\section{Introduction}
\label{sec:intro}

Autoregressive (AR) video diffusion enables streaming generation by producing
video blocks conditioned on previously generated content. This formulation
supports interactive applications including world modeling, game simulation,
and embodied intelligence~\citep{jin2025pyramidal,teng2025magi,chen2025skyreels,genie3}.
Despite these capabilities, repeatedly denoising each block increases
inference cost. Reducing the number of sampling steps while preserving visual
quality is therefore important for real-time video generation.

To reduce this cost, distillation methods~\citep{yin2025causvid,huang2025self}
convert pretrained bidirectional video diffusion models into few-step AR
students. The training pipeline typically consists of an initialization stage
followed by asymmetric distribution matching distillation
(DMD)~\citep{yin2024one,yin2024improved}. Beyond accelerating sampling,
initialization must account for the difference between bidirectional teachers
and causal students. Prior analysis~\citep{zhu2026causal} identifies frame-level
injectivity as a requirement for recovering the teacher's flow map through ODE
distillation: each input available to the student must determine a unique
endpoint. A bidirectional teacher can violate this requirement through its
dependence on future frames unavailable to the student. Training an intermediate
AR teacher and distilling its causal trajectories provides compatible targets,
but requires two training stages before DMD.

However, resolving causal compatibility does not by itself ensure high-quality
few-step generation. In this work, we revisit the supervision used to construct
the initialization. Direct latent regression favors posterior-mean clean
predictions: when the noisy input and its conditions admit different plausible
continuations, coordinate-wise averaging can blur texture and blend incompatible
structures. This concerns individual clean predictions rather than the output
distribution of a complete multi-step sampler. Empirically, directly shortening
the sampling schedule of the teacher-forced AR diffusion model produces blurred
outputs. Causal ODE initialization alleviates this blur, but still leaves fine
detail unresolved under few-step sampling. Figure~\ref{fig:nfe_sweep} compares
the initialization methods as the number of sampling steps decreases, revealing
a larger quality gap between ODE initialization and RepForcing at four steps.
Thus, deterministic teacher targets do not remove the observed limitation. These
findings motivate an alternative to intermediate-teacher initialization, rather
than an additional distillation stage.

To address this limitation, we propose \emph{Representation Forcing}
(RepForcing), which learns few-step causal initialization directly in a
pretrained video representation space. The student predicts clean video blocks
under teacher forcing, and its predicted video is compared with the training
target through normalized features of a frozen video diffusion transformer.
Unlike coordinate-wise latent regression, this objective need not favor the
latent posterior mean. RepForcing preserves the causal student architecture
and reuses the synthetic videos generated by a bidirectional Wan model from
VidProM~\citep{wang2024vidprom} prompts for intermediate AR teacher training.
It requires no additional causal ODE trajectory pairs. A single teacher-forced
phase replaces both intermediate-teacher training and ODE distillation, after
which the student undergoes the same asymmetric DMD refinement using its own
generated prefixes.

Experiments evaluate both the resulting initialization and the final generator.
Before DMD, RepForcing better preserves generation quality than
intermediate-teacher-based initialization at four sampling steps.
After DMD, it improves VBench Total and
VisionReward over the reported intermediate-teacher baselines in both chunk-wise
and frame-wise generation using the same number of sampling steps
(Table~\ref{tab:main_comparison}). For four-step chunk-wise generation, the
improvements are 0.62 points and 59.25\%, respectively.
\textbf{Our contributions are summarized as follows:}
\begin{itemize}
\item We analyze the limitations of intermediate-teacher initialization
  for few-step autoregressive generation. Our analysis connects the
  posterior-mean behavior of latent regression to the loss of detail
  under a short sampling schedule, while controlled comparisons show
  that causal ODE initialization alleviates blur but leaves fine detail
  unresolved.
\item We propose \textbf{\textit{Representation Forcing}}, which reformulates
  few-step causal initialization as regression in a pretrained video
  representation space. It replaces intermediate-teacher training and causal
  ODE distillation with a single teacher-forced phase on the same synthetic
  training videos.
\item Experimental results show that RepForcing better preserves generation
  quality than intermediate-teacher-based initialization at four sampling
  steps before DMD. After DMD, it improves VBench Total and VisionReward
  over the reported Causal Forcing baselines.
\end{itemize}

\section{Related Work}
\label{sec:related}

\paragraph{Autoregressive Video Generation.}
AR video diffusion produces a video block by block and runs diffusion within
each block, which enables streaming
generation~\citep{jin2025pyramidal,teng2025magi,chen2025skyreels}. During
training, each block is conditioned on clean history under teacher forcing or on
noisy history under diffusion forcing~\citep{chen2024diffusion}, whereas at
inference it is conditioned on generated history. Self
Forcing~\citep{huang2025self} closes this gap by training on self-rollouts with
KV caching, AAPT~\citep{lin2026autoregressive} uses adversarial post-training for
single-step streaming, and further work extends the generation
horizon~\citep{cui2026selfforcingpp,liu2025rolling,longlive}.

\paragraph{Few-Step Initialization for AR Distillation.}
CausVid~\citep{yin2025causvid} and Self Forcing initialize the AR student with
ODE pairs from a bidirectional teacher before DMD~\citep{yin2024one,yin2024improved}.
Causal Forcing~\citep{zhu2026causal} replaces this teacher with an AR model
trained under teacher forcing, so that the distilled trajectories respect
causality. Causal Forcing++~\citep{zhao2026causalforcingpp} and
Causal-rCM~\citep{zheng2026causalrcm} substitute causal consistency distillation
or training for the trajectory stage, building on consistency
models~\citep{song2023consistency,lu2025scm}. All of these learn the few-step
initialization from an intermediate multi-step teacher; we instead obtain it
directly from the shared bidirectional-model-generated training videos by
changing the regression metric.

\paragraph{Learned Representations as Training Signals.}
Perceptual losses compare predictions and targets through a pretrained
network~\citep{simonyan2014very,johnson2016perceptual}, and
LPIPS~\citep{zhang2018unreasonable} calibrates such distances to human
judgments. In generative modeling, REPA~\citep{yu2024representation} and
Geometry Forcing~\citep{wu2026geometryforcing} align a generator's internal activations
with an external encoder such as DINOv2~\citep{oquab2023dinov2},
REPA-E~\citep{leng2025repa} extends this alignment to end-to-end VAE tuning,
representation autoencoders~\citep{zheng2025rae} perform diffusion in the space
of a pretrained encoder, and LADD~\citep{sauer2024fast} uses diffusion features
for adversarial distillation. Our objective differs from each of them: it preserves
the generator architecture and VAE, requires no discriminator, and
uses a frozen video DiT only to measure the distance between a predicted
clean latent and its target.

\section{Preliminaries}
\label{sec:prelim}

Following Causal Forcing~\citep{zhu2026causal}, let
$(X_0,y)\sim p_{\mathrm{data}}$ be a video VAE latent~\citep{rombach2022high,blattmann2023align}
and text condition, with $X_0=(\mathbf x_0^1,\ldots,\mathbf x_0^N)$ split into
$N$ frame- or chunk-wise blocks. We reuse its first-stage dataset
$\mathcal D_{\mathrm{Bi}}$: approximately 3K videos synthesized by bidirectional
Wan from VidProM~\citep{wang2024vidprom} prompts. ``Clean'' and ``ground truth''
refer to these unnoised synthetic targets.

\subsection{Flow matching}
\label{sec:prelim_flow}

For a clean latent $\mathbf x_0$, the forward noising kernel $q_{t|0}$ is
\begin{equation}
\label{eq:prelim_path}
\mathbf x_t=\alpha_t\mathbf x_0+\sigma_t\boldsymbol\epsilon,
\qquad \boldsymbol\epsilon\sim\mathcal N(\mathbf0,I),
\qquad \alpha_t=1-t,\quad \sigma_t=t,
\end{equation}
where $t\in[0,1]$ is normalized noise time, $I$ is the identity covariance,
and noise is independent of the data. Given condition $h$, flow
matching~\citep{liu2022flow,lipmanflow} trains velocity network $v_\theta$
and defines its clean prediction $G_\theta$ by
\begin{equation}
\label{eq:prelim_fm}
\begin{gathered}
\mathcal L_{\mathrm{FM}}(\theta)
=\mathbb E_{\mathbf x_0,h,t,\boldsymbol\epsilon}
\left[w(t)\left\|v_\theta(\mathbf x_t,t,h)
-(\boldsymbol\epsilon-\mathbf x_0)\right\|_2^2\right],\\[4pt]
G_\theta(\mathbf x_t,t,h)=\mathbf x_t-t\,v_\theta(\mathbf x_t,t,h).
\end{gathered}
\end{equation}
Here $\theta$ denotes trainable parameters and $w(t)>0$ weights the sampled
times. Sampling integrates the probability flow ODE
$\mathrm d\mathbf z_t=v_\theta(\mathbf z_t,t,h)\,\mathrm dt$ from standard
Gaussian $\mathbf z_1$ to $t=0$~\citep{song2021score}. Unlike directly noised
$\mathbf x_t$, $\mathbf z_t$ lies on a sampling trajectory; its integrated
endpoint need not equal the single prediction $G_\theta$.

\begin{figure}[t]
\centering
\includegraphics[width=\linewidth]{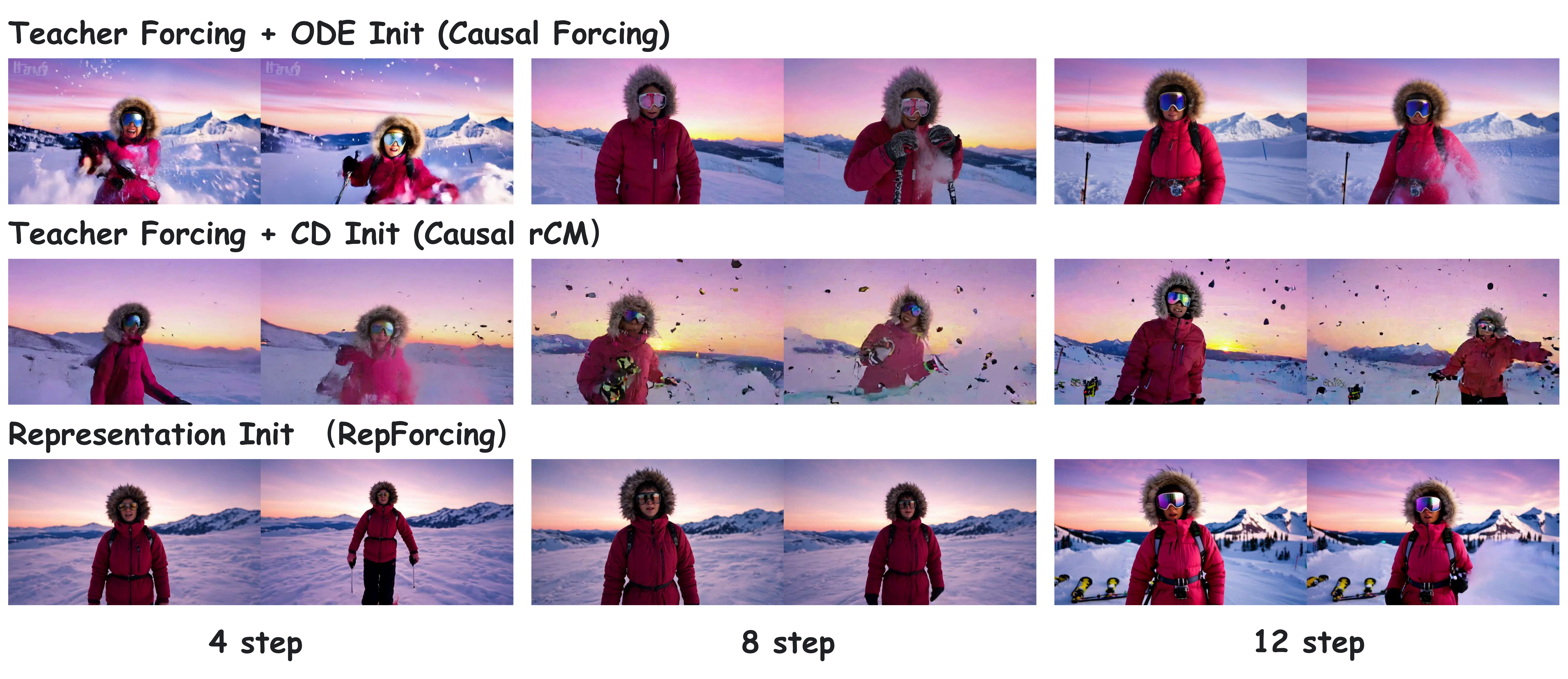}
\caption{\textbf{RepForcing Initialization Better Preserves Generation Quality as Sampling Steps Decrease.}
Rows compare three initialization strategies: teacher-forced AR training
followed by ODE distillation, teacher-forced AR training followed by consistency
distillation, and direct representation-space initialization, from top to
bottom. All results are shown before subsequent refinement. Columns show 4, 8, and 12 sampling steps from
left to right, with two frames per setting for the same prompt.
With more denoising steps, ODE initialization approaches the visual quality
of RepForcing. At four steps, however, the intermediate-teacher initializations
exhibit blur or structural artifacts, while RepForcing maintains coherent
subject and scene structure.}
\label{fig:nfe_sweep}
\end{figure}

\subsection{Autoregressive diffusion and ODE initialization}
\label{sec:prelim_ar}

With prefix $\mathbf x_0^{<i}$ (empty for $i=1$) and condition
$h_i=(\mathbf x_0^{<i},y)$, the model factorizes
\begin{equation}
\label{eq:prelim_ar}
p_\theta(X_0\mid y)=\prod_{i=1}^{N}p_\theta(\mathbf x_0^i\mid h_i).
\end{equation}
Teacher forcing uses clean prefixes and independently noised blocks from
Eq.~(\ref{eq:prelim_path}), with times $\mathbf t=(t_1,\ldots,t_N)$ and
noises $E=(\boldsymbol\epsilon^1,\ldots,\boldsymbol\epsilon^N)$.
Set $U^i=(\mathbf x_{t_i}^i,t_i,h_i)$ and
$\hat{\mathbf x}_0^i=G_\theta(U^i):=G_\theta(\mathbf x_{t_i}^i,t_i,h_i)$,
and collect predictions as $\hat X_0=(\hat{\mathbf x}_0^1,\ldots,\hat{\mathbf x}_0^N)$.
A block-causal mask on concatenated clean and noisy sequences enables one-pass
training~\citep{teng2025magi,zhu2026causal}; free AR sampling instead uses
generated prefixes and produces $\tilde X_0$.

Causal Forcing trains an AR teacher $v_\psi$ on $\mathcal D_{\mathrm{Bi}}$,
then freezes $\psi$ and samples trajectories under $h_i$ at the few-step
schedule $\mathcal S\subset(0,1]$. The resulting tuple distribution
$\mathcal D_{\mathrm{ODE}}$ supplies endpoint regression pairs:
\begin{equation}
\label{eq:cf_stage2}
\mathcal L_{\mathrm{ODE}}(\theta)
=\mathbb E_{(i,h_i,t,\mathbf z_t^i,\mathbf z_0^i)\sim\mathcal D_{\mathrm{ODE}}}
\left[\left\|G_\theta(\mathbf z_t^i,t,h_i)-\mathbf z_0^i\right\|_2^2\right].
\end{equation}
Here $\mathbf z_0^i$ is the teacher-generated endpoint for the causal input
$(\mathbf z_t^i,t,h_i)$, not the training clip's target; only the prefix comes
from that clip. A well-defined teacher ODE makes this endpoint deterministic.

\section{Method}
\label{sec:method}

\begin{figure}[t]
\centering
\includegraphics[width=\linewidth,trim=0bp 0bp 302bp 0bp,clip]{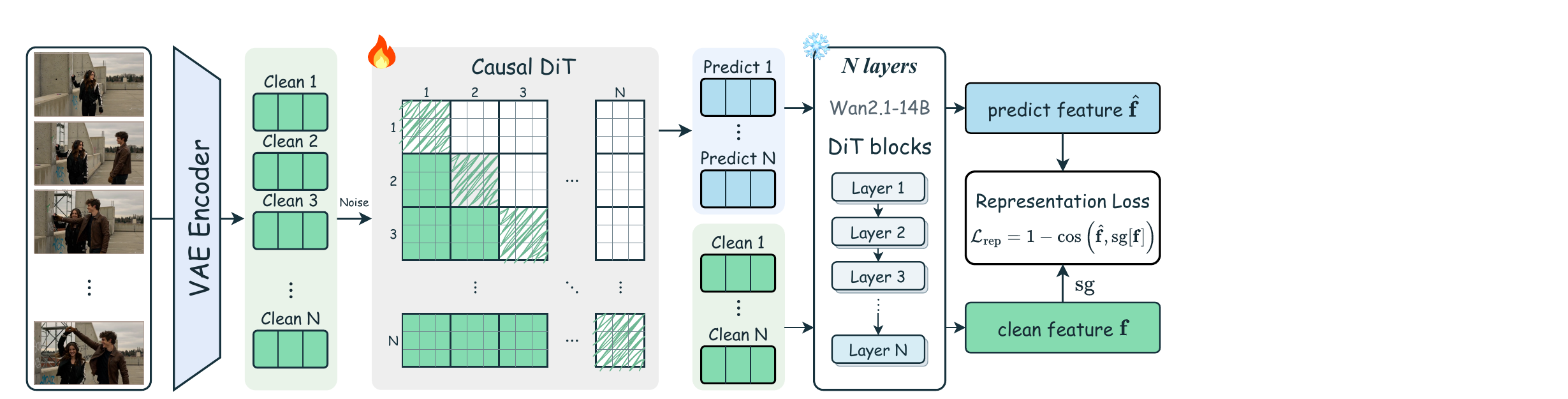}
\caption{\textbf{Representation Forcing Initialization Pipeline.}
Training videos are encoded into clean latent blocks by a VAE encoder.
The trainable causal DiT reconstructs clean blocks from noisy inputs,
conditioned on preceding clean blocks under teacher forcing.
The predicted and target video latents are passed separately through the same
truncated, frozen Wan2.1-14B to obtain features $\hat{\mathbf f}$ and
$\mathbf f$. A token-wise cosine loss compares these features, with
stop-gradient ($\operatorname{sg}$) on the target branch. Gradients propagate
through the frozen extractor to update the causal DiT.  This initialization
replaces intermediate AR teacher training and causal ODE distillation.}
\label{fig:training_routes}
\end{figure}

\subsection{Analysis: Suboptimality of Initialization with an Intermediate Teacher}
\label{sec:analysis}

We analyze the limitations of direct latent regression and examine the
remaining deficiencies of causal ODE initialization under few-step sampling,
following the training setup described in Sec.~\ref{sec:prelim_ar}.

\paragraph{Latent $\ell_2$ Regression Blurs Few-Step Predictions.}
To analyze the direct predictions of the intermediate teacher, write the
clean-block regression loss using the quantities defined in
Sec.~\ref{sec:prelim_ar}:
\begin{equation}
\label{eq:latent_mse}
\mathcal L_{\mathrm{MSE}}(\theta)
=\mathbb E_{X_0,y,E,\mathbf t}
\left[\sum_{i=1}^N\left\|\hat{\mathbf x}_0^i-\mathbf x_0^i\right\|_2^2\right].
\end{equation}
By the parameterization in Eq.~(\ref{eq:prelim_fm}), each clean-block
squared error is $t_i^2$ times its velocity squared error. Thus the losses
differ in timestep weighting, but have the same unrestricted conditional
optimum at a fixed positive time.

\begin{proposition}[Posterior mean as the minimizer of latent $\ell_2$ regression]
\label{prop:collapse}
Fix a block $i$ and $t\in(0,1]$ under the independent Gaussian noising
process of Sec.~\ref{sec:prelim_ar}. Assume
$\mathbb E\|\mathbf x_0^i\|_2^2<\infty$ and that
$p_{\mathrm{data}}(\mathbf x_0^i\mid h_i)$ is not a point mass on a set of
conditions $h_i$ of positive probability. Over square-integrable measurable
predictors, the fixed-time blockwise risk in Eq.~(\ref{eq:latent_mse}) has
the almost-surely unique minimizer
\begin{equation}
\label{eq:collapse}
\hat{\mathbf{x}}_0^{i,*}
=\mathbb E[\mathbf x_0^i\mid\mathbf x_t^i,h_i],
\qquad
\mathbb{E}\big\|\hat{\mathbf{x}}_0^{i,*}\big\|_2^2<\mathbb{E}\big\|\mathbf{x}_0^i\big\|_2^2,
\end{equation}
so the optimal predictions are not distributed as the data. Proof in
Appendix~\ref{app:proof_collapse}.
\end{proposition}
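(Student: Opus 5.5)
The plan has two parts. First, the standard $L^2$ projection argument gives the minimizer. Second, a variance decomposition gives the strict norm inequality.

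Write $Z=(\mathbf x_t^i,h_i)$ for the predictor's input at the fixed time $t$, so admissible predictors are $g(Z)$ with $\mathbb E\|g(Z)\|_2^2<\infty$. Set $m(Z)=\mathbb E[\mathbf x_0^i\mid Z]$. It is well defined and square integrable by conditional Jensen, since $\mathbb E\|\mathbf x_0^i\|_2^2<\infty$. Expanding the risk gives
\[
\mathbb E\|g(Z)-\mathbf x_0^i\|_2^2=\mathbb E\|g(Z)-m(Z)\|_2^2+\mathbb E\|m(Z)-\mathbf x_0^i\|_2^2 .
\]
The cross term $\mathbb E\langle g(Z)-m(Z),\,m(Z)-\mathbf x_0^i\rangle$ vanishes by the tower property, conditioning on $Z$. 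The justification is Cauchy--Schwarz integrability of the product of two $L^2$ vectors. Hence $m$ minimizes the risk, and any other minimizer satisfies $\mathbb E\|g-m\|^2=0$, i.e.\ $g=m$ almost surely. The remaining terms of Eq.~(\ref{eq:latent_mse}), for other blocks or other times, do not interact, so the fixed-time blockwise risk reduces to this single term.

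For the inequality, the same orthogonality with $g=0$ yields
\[
\mathbb E\|\mathbf x_0^i\|_2^2=\mathbb E\|m(Z)\|_2^2+\mathbb E\|\mathbf x_0^i-m(Z)\|_2^2 .
\]
Strictness therefore amounts to $\mathbb E\,\mathrm{tr}\,\mathrm{Cov}(\mathbf x_0^i\mid \mathbf x_t^i,h_i)>0$. This is the main step, since the hypothesis is stated in terms of $h_i$ alone and not $(\mathbf x_t^i,h_i)$.

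I argue by contradiction. Suppose the posterior variance were zero almost surely. Then $\mathbf x_0^i=m(\mathbf x_t^i,h_i)$ almost surely, so $\mathbf x_0^i$ is a measurable function of $\alpha_t\mathbf x_0^i+t\boldsymbol\epsilon$ given $h_i$. Conditionally on $h_i$, the law of $\mathbf x_t^i$ is the law of $\alpha_t\mathbf x_0^i$ convolved with a nondegenerate Gaussian, because $t>0$ and $\boldsymbol\epsilon$ is independent of $(\mathbf x_0^i,h_i)$. This law has an everywhere positive density on $\mathbb R^d$.

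The posterior of $\mathbf x_0^i$ given $\mathbf x_t^i=x$ and $h_i$ then follows from Bayes' rule. It has density proportional to $\exp(-\|x-\alpha_t\mathbf x_0\|^2/2t^2)$ with respect to $p_{\mathrm{data}}(\cdot\mid h_i)$. Because this Gaussian weight is strictly positive everywhere, the posterior is mutually absolutely continuous with $p_{\mathrm{data}}(\cdot\mid h_i)$. It is therefore a point mass only if $p_{\mathrm{data}}(\cdot\mid h_i)$ is one.

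By assumption this fails on a set of $h_i$ of positive probability, and on that set it fails for every $x$. So the posterior covariance trace is strictly positive on a set of positive probability, and its expectation is positive.

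When $t=1$ we have $\alpha_t=0$, and the posterior is the conditional law itself, so this case is immediate. Finally, if $m(Z)$ had the same law as $\mathbf x_0^i$, their second moments would coincide, contradicting strictness. Hence the optimal predictions are not distributed as the data.

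The delicate step is the Bayes-rule representation under a possibly singular $p_{\mathrm{data}}$. To handle it I would use a regular conditional distribution of $\mathbf x_0^i$ given $h_i$ and verify the posterior formula via Fubini.
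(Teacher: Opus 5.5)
Your proposal is correct and follows essentially the same route as the paper's proof. Both use the orthogonal decomposition for almost-sure uniqueness of the posterior mean, and the variance identity $\mathbb E\|\mathbf x_0^i\|_2^2=\mathbb E\|m\|_2^2+\mathbb E\,\mathrm{tr}\,\mathrm{Cov}(\mathbf x_0^i\mid \mathbf x_t^i,h_i)$; both then get strictness because the strictly positive Gaussian likelihood makes the posterior mutually absolutely continuous with $p_{\mathrm{data}}(\cdot\mid h_i)$. Your extra steps (the separate $t=1$ case, the contradiction framing you never actually use, and the regular-conditional-distribution bookkeeping) are harmless but not needed.
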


Proposition~\ref{prop:collapse} shows that latent regression favors the
conditional mean of clean continuations compatible with the noisy input and
its conditions. When these continuations differ in texture placement or motion
phase, coordinate-wise averaging can blend incompatible structures. Multi-step
sampling refines successive predictions as noise decreases, whereas a short
schedule provides fewer refinement steps. The proposition characterizes direct
clean predictions, rather than the output distribution of the complete sampler.
Empirically, the teacher-forced AR diffusion model produces blurred outputs
under four-step sampling, indicating that diffusion training alone does not
provide the required few-step initialization in this setting.

\paragraph{Causal ODE Initialization Alleviates Blur but Leaves Fine Detail Unresolved.}
Causal ODE initialization replaces the data-regression target with a
deterministic endpoint generated by the AR teacher, as defined in
Eq.~(\ref{eq:cf_stage2}). This removes the conditional target ambiguity of
direct latent regression. However, target determinism alone does not ensure
high-quality generation in the few-step regime. As shown in
Fig.~\ref{fig:nfe_sweep}, ODE initialization produces coherent results with
eight or twelve sampling steps, but exhibits degraded visual quality when
sampling is reduced to four steps. This
observation motivates replacing the intermediate-teacher initialization route
with direct representation-space supervision.

\subsection{Representation Forcing}
\label{sec:repforcing}

Building on the above analysis, we propose Representation Forcing (RepForcing),
a two-stage method consisting of representation-space teacher forcing
(Fig.~\ref{fig:training_routes}) and asymmetric DMD. The first stage directly
trains the causal student on the same $\mathcal D_{\mathrm{Bi}}$ used for
Causal Forcing's AR teacher training. These synthetic videos supply the clean
targets and prefixes, without intermediate AR teacher training or additional
causal ODE trajectory generation.

\paragraph{Representation-Space Supervision Preserves Fine Detail under Few-Step Sampling.}
Instead of regressing teacher-generated trajectories, we compare the student's
clean predictions with the training videos in a pretrained representation
space. Minimizing normalized-feature discrepancy need not favor the
coordinate-wise latent mean (Appendix~\ref{app:feature_regression}). If the
representation distinguishes plausible continuations from their blurred
blends, this objective can disfavor those blends. Empirically,
Fig.~\ref{fig:nfe_sweep} shows that RepForcing better preserves generation
quality at four steps than intermediate-teacher-based initialization, before
subsequent refinement.

\paragraph{Stage 1: Teacher Forcing in Representation Space.}
\label{sec:method_loss}
Given the teacher-forced predictions $\hat X_0$ defined in
Sec.~\ref{sec:prelim_ar} and the target video $X_0$, a shared frozen video DiT
$\Phi_{L,\tau}$ extracts their token features. Here $L$ is the one-based
extraction block index, $\tau$ is the extractor's timestep embedding value, and $K$
is the number of tokens. Feature extraction uses native bidirectional
attention without changing the student's causal architecture:
\begin{equation}
\label{eq:method_features}
\hat{\mathbf f}_k=[\Phi_{L,\tau}(\hat X_0;y)]_k,
\qquad \mathbf f_k=[\Phi_{L,\tau}(X_0;y)]_k,
\qquad k=1,\ldots,K.
\end{equation}
The subscript $k$ selects a token feature vector. For a video latent $X$,
$\bar\Phi(X;y)$ denotes the concatenated unit-normalized token features of
$\Phi_{L,\tau}(X;y)$, with $L$ and $\tau$ fixed.
We optimize the token-wise cosine objective
\begin{equation}
\label{eq:feat_loss}
\mathcal L_{\mathrm{rep}}(\theta)
=\mathbb E_{X_0,y,E,\mathbf t}
\left[\frac1K\sum_{k=1}^{K}
\left(1-\cos\!\left(\hat{\mathbf f}_k,
\operatorname{sg}[\mathbf f_k]\right)\right)\right],
\end{equation}
where $\cos$ denotes cosine similarity between nonzero feature vectors and
$\operatorname{sg}$ denotes stop-gradient. The target features are detached,
while gradients through the frozen extractor update the causal student.
No latent MSE or adversarial term is added. The per-video loss is equivalently
$\|\bar\Phi(\hat X_0;y)-\bar\Phi(X_0;y)\|_2^2/(2K)$, linking the cosine
objective to regression on normalized features.

\paragraph{Criteria for Representation Selection.}
\label{sec:method_extractor}
The representation loss should favor detailed predictions rather than
latent blends. We use two criteria for the extractor $\Phi_{L,\tau}$.
\textbf{\textit{First}}, we favor video representations that capture motion
and inter-frame relations, which frame-wise image features do not explicitly
model. This motivates extracting features from whole video sequences rather
than individual frames. \textbf{\textit{Second}}, features
should be read from an intermediate depth. We expect shallow features to be close to an isometry of
the latent space and thus to inherit its averaging behaviour, and the deepest
features to become invariant to fine appearance.

\paragraph{Stage 2: Asymmetric DMD.}
\label{sec:method_pipeline}
Starting from the representation-space initialization, we apply asymmetric
DMD following Self Forcing~\citep{huang2025self}, as in Causal
Forcing~\citep{zhu2026causal}. The student performs few-step autoregressive
rollouts conditioned on its own generated prefixes, reducing the mismatch
between teacher-forced training and inference. We retain this refinement
stage unchanged; the feature extractor is no longer used.

\section{Experiments}
\label{sec:exp}

\begin{figure}[t]
\centering
\includegraphics[width=\linewidth]{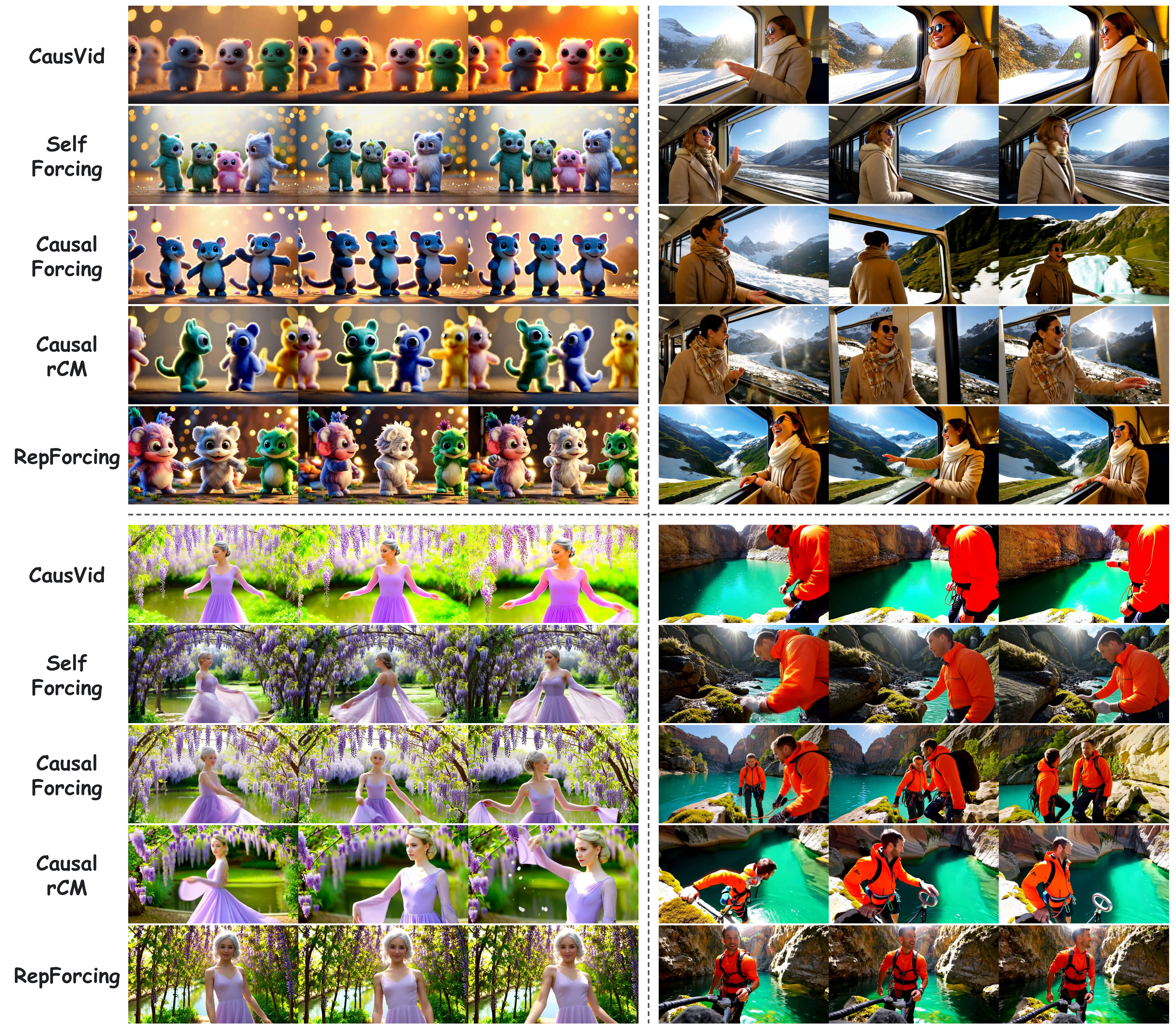}
\caption{\textbf{Qualitative comparison of chunk-wise distilled autoregressive
models.} For each of four prompts, three sampled frames are shown for CausVid,
Self Forcing, Causal Forcing, Causal-rCM, and RepForcing. The RepForcing examples
retain visible fur strands and facial detail in the character scene, together
with clothing folds, hanging blossoms, and rock textures in the human scenes.
Changes in body and hand pose are visible across the sampled frames.}
\label{fig:baseline}
\end{figure}

We evaluate whether the initial teacher-forced training can replace an independent
few-step distillation stage. We report the quality of the initialization before DMD, the quality of the
final model after DMD, and results under several sampling settings. Published
reference numbers and our matched initialization experiments are reported separately.
Additional results are provided in the appendix.

\begin{table}[t]
\centering
\small
\setlength{\tabcolsep}{4pt}
\caption{Comparison with existing methods. Dynamic., Vision., and Instruct. denote
Dynamic Degree, VisionReward, and Instruction Following. \textbf{Bold} marks
column-wise best results within each distilled AR group, including ties;
empty cells are unreported. Baseline scores follow \citet{zhu2026causal};
frame-wise four-step Causal Forcing++ scores come from its own
report~\citep{zhao2026causalforcingpp}, and Causal-rCM scores are obtained from
evaluations of pretrained weights~\citep{zheng2026causalrcm}.}
\label{tab:main_comparison}
\resizebox{\textwidth}{!}{%
\begin{tabular}{lcccccc}
\toprule
Model & Total $\uparrow$ & Quality $\uparrow$ & Semantic $\uparrow$ & Dynamic. $\uparrow$ & Vision. $\uparrow$ & Instruct. $\uparrow$ \\
\midrule
\rowcolor{catgray}
\multicolumn{7}{l}{\textit{Bidirectional video models}} \\
LTX-1.9B~\citep{hacohen2024ltx} & 79.83 & 81.88 & 71.62 & 46 & $-6.218$ & $-38$ \\
Wan2.1-1.3B~\citep{wan2025wan} & 83.37 & 84.30 & 79.65 & 61 & 5.275 & 42 \\
\midrule
\rowcolor{catgray}
\multicolumn{7}{l}{\textit{Autoregressive video models}} \\
NOVA~\citep{deng2024nova} & 80.31 & 80.66 & 78.92 & 46 & $-7.381$ & $-16$ \\
Pyramid Flow~\citep{jin2025pyramidal} & 80.75 & 83.41 & 70.11 & 16 & 4.055 & $-2$ \\
SkyReels-V2-1.3B~\citep{chen2025skyreels} & 81.97 & 83.96 & 74.01 & 37 & 3.584 & 32 \\
MAGI-1-4.5B~\citep{teng2025magi} & 78.88 & 81.67 & 67.72 & 42 & 0.773 & 8 \\
\midrule
\rowcolor{catgray}
\multicolumn{7}{l}{\textit{Distilled autoregressive models: chunk-wise}} \\
CausVid~\citep{yin2025causvid} & 81.33 & 83.98 & 70.72 & 62 & 5.741 & 12 \\
Self Forcing~\citep{huang2025self} & 83.74 & 84.48 & 80.77 & 57 & 5.820 & 48 \\
Causal Forcing~\citep{zhu2026causal} & 84.04 & 84.59 & \textbf{81.84} & 68 & 6.326 & 56 \\
Causal-rCM~\citep{zheng2026causalrcm} & 84.37 & 85.02 & 81.73 & \textbf{71} & 8.211 & 64 \\
Ours & \textbf{84.66} & \textbf{85.55} & 81.10 & 69 & \textbf{10.074} & \textbf{78} \\
\midrule
\rowcolor{catgray}
\multicolumn{7}{l}{\textit{Distilled autoregressive models: frame-wise}} \\
Causal Forcing~\citep{zhu2026causal} & 83.75 & 84.35 & \textbf{81.37} & 64 & 6.204 & 42 \\
Causal Forcing++~\citep{zhao2026causalforcingpp} & 84.10 & 84.94 & 80.75 & 71 & 6.798 & 47 \\
Ours & \textbf{84.38} & \textbf{85.67} & 79.21 & \textbf{78} & \textbf{10.598} & \textbf{80} \\
\bottomrule
\end{tabular}%
}
\vspace{-2em}
\end{table}

\subsection{Setup}
\label{sec:exp_setup}

\paragraph{Implementation Details.}
RepForcing's representation-space initialization uses the shared synthetic
dataset $\mathcal D_{\mathrm{Bi}}$ defined in Sec.~\ref{sec:prelim}.
We use a Wan2.1-1.3B causal student, a frozen Wan2.1-14B feature extractor and
DMD teacher, and a Wan2.1-1.3B auxiliary score model. Final models are evaluated
with four-step generation in both chunk-wise and frame-wise settings, using
three latent frames per chunk in the former. Each model is refined with its
corresponding four-step DMD rollout schedule. Representation-model and
extraction-depth ablations use the chunk-wise four-step setting, with the same
training data, initial student weights, Stage~1 update budget, and DMD teacher,
update schedule, and update budget.
For feature supervision, we read block $L=8$ (one-based) of the frozen
Wan2.1-14B~\citep{wan2025wan}, with $\tau=0$ and clean predicted and target
latent inputs.

\paragraph{Evaluation.}
Following Causal Forcing~\citep{zhu2026causal}, we evaluate 81-frame videos at
$832\times480$. VBench~\citep{huang2024vbench} Total, Quality, and Semantic
use standard prompts. Our standalone Dynamic Degree, VisionReward, and
Instruction Following scores~\citep{xu2024visionreward} use Causal Forcing's
released 100-prompt motion set and are scaled by 100. All six metrics are higher-is-better,
and negative reward scores are valid.

\subsection{Results}
\label{sec:exp_comparison}

\paragraph{Performance Comparison with Existing Models.}
Table~\ref{tab:main_comparison} compares the final generator with bidirectional,
autoregressive, and distilled autoregressive video models. We compare distilled
methods within their chunk-wise or frame-wise settings. Baseline results are
taken from the sources specified in the table caption, rather than from a
controlled retraining comparison.
RepForcing attains the highest VBench Total, Quality, VisionReward, and
Instruction Following scores among the listed distilled methods in both groups.

Under four-step chunk-wise generation, RepForcing reaches 84.66 VBench Total
and 10.074 VisionReward, exceeding the reported Causal Forcing~\citep{zhu2026causal} results by
0.62 points and 59.25\%, respectively. Relative to Causal-rCM~\citep{zheng2026causalrcm}, the corresponding
gains are 0.29 points and 22.69\%. These results show that omitting an
intermediate AR teacher and its trajectory-distillation stage need not
compromise the final model's aggregate quality or reward scores.

Under four-step frame-wise generation, RepForcing reaches 84.38 VBench Total
and 10.598 VisionReward. Compared with Causal Forcing++~\citep{zhao2026causalforcingpp}, this gives gains of
0.28 points and 55.90\%, respectively, while Instruction Following increases
from 47 to 80. The gains are not uniform across metrics: Semantic remains
below Causal Forcing~\citep{zhu2026causal} in both settings, and chunk-wise Dynamic Degree is 69
versus 71 for Causal-rCM~\citep{zheng2026causalrcm}. The results therefore support improved aggregate
quality and reward-based evaluation, rather than superiority on every metric.

\paragraph{Qualitative Comparison.}
Figure~\ref{fig:baseline} compares three sampled frames for each of four
prompts across distilled AR methods. The differences concern local detail
as well as recognizable content: RepForcing's character examples show fur
strands and facial features, while the human scenes retain clothing folds,
hanging blossoms, and rock textures. Changes in pose are visible across the
sampled frames, with scene elements such as the train interior remaining
recognizable in the RepForcing examples. These observations complement the
aggregate scores with visible differences in local texture and scene rendering.

\subsection{Ablation Studies}
\label{sec:exp_ablation}

We examine the representation model and extraction depth in the chunk-wise
four-step setting. All variants use the same training data, initial student
weights, Stage~1 update budget, cosine objective, and DMD configuration and
update budget. Table~\ref{tab:initialization} reports post-DMD results using
the metrics and evaluation protocol of Table~\ref{tab:main_comparison}.

\paragraph{Representation Model.}
\label{sec:exp_initialization}
We compare DINOv2~\citep{oquab2023dinov2}, InternVid~\citep{wang2024internvid},
and Wan2.1-14B~\citep{wan2025wan} features while keeping the DMD teacher fixed. Wan2.1-14B achieves the highest Total, Quality, Semantic,
VisionReward, and Instruction Following scores among the tested extractors.
Compared with InternVid, it improves VBench Total by 1.58 points and
VisionReward from 7.653 to 10.074. DINOv2 yields the highest Dynamic Degree,
but lower scores on the other five metrics than Wan2.1-14B. These results
support using the video diffusion representation for supervision, without
attributing the differences solely to temporal modeling.

\paragraph{Relative Extraction Depth.}
We compare features extracted at $1/8$, $1/5$, $1/2$, and $3/4$ of the
Wan2.1-14B~\citep{wan2025wan} network depth, using the selected layer's features
rather than averaging preceding layers. With 40 Transformer blocks, these
fractions correspond to blocks 5, 8, 20, and 30, respectively, counted from 1.
The $1/5$ setting achieves the highest
Total, Quality, Semantic, VisionReward, and Instruction Following scores among
the tested depths. Shallower extraction at $1/8$ yields lower scores, while
extending extraction to $1/2$ or $3/4$ provides no further improvement on these
metrics. Although $3/4$ attains the highest Dynamic Degree, its lower quality
and reward scores indicate that deeper features are not uniformly better suited
to the supervision task.

\begin{table}[t]
\centering
\small
\setlength{\tabcolsep}{4pt}
\caption{\textbf{Ablation Study on Representation Models and Extraction Depth.}
All variants are evaluated with chunk-wise four-step generation after
matched DMD refinement. Repr.\ Model compares feature extractors, while
Relative Depth varies the extraction position within Wan2.1-14B.
The default representation model is shown in bold.
Dynamic., Vision., and Instruct. denote Dynamic Degree, VisionReward,
and Instruction Following.}
\label{tab:initialization}
\resizebox{\textwidth}{!}{%
\begin{tabular}{llrrrrrr}
\toprule
Ablation & Setting & Total $\uparrow$ & Quality $\uparrow$ & Semantic $\uparrow$ & Dynamic. $\uparrow$ & Vision. $\uparrow$ & Instruct. $\uparrow$ \\
\midrule
\multirow{3}{*}{\textbf{Repr.\ Model}} & DINOv2 & 82.82 & 83.73 & 79.15 & 75 & 5.947 & 53 \\
 & InternVid & 83.08 & 84.33 & 78.05 & 64 & 7.653 & 62 \\
 & \textbf{Wan2.1-14B} & 84.66 & 85.55 & 81.10 & 69 & 10.074 & 78\\ 
\midrule
\multirow{4}{*}{\shortstack[l]{\textbf{Relative}\\\textbf{Depth}}} & $1/8$ & 84.49 & 85.45 & 80.64 & 61 & 9.217 & 69 \\
 & $\boldsymbol{1/5}$ & 84.66 & 85.55 & 81.10 & 69 & 10.074 & 78 \\
 & $1/2$ & 84.58 & 85.51 & 80.86 & 65 & 9.893 & 77 \\
 & $3/4$ & 84.44 & 85.28 & 81.09 & 71 & 9.728 & 73 \\
\bottomrule
\end{tabular}%
}
\end{table}

\section{Conclusion}
\label{sec:conclusion}

In this work, we examine the limitations of intermediate-teacher initialization
for few-step autoregressive video generation. Our analysis links latent regression
to posterior-mean predictions, while experiments reveal that causal ODE
initialization still leaves fine detail unresolved under few-step sampling.
Motivated by these findings, we propose Representation Forcing, which supervises
the causal student's clean predictions through normalized features of a frozen
video diffusion transformer. A single teacher-forced phase on the same synthetic
training videos replaces intermediate-teacher training and causal ODE distillation,
followed by asymmetric DMD. Experiments show that RepForcing better preserves
generation quality than intermediate-teacher-based initialization at four
sampling steps before DMD. After DMD, it achieves higher VBench Total and
VisionReward than the reported Causal Forcing baselines in both chunk-wise and
frame-wise generation, without additional causal ODE trajectory generation.

\bibliography{reference}
\bibliographystyle{iclr2027_conference}

\clearpage
\appendix
\section{More Visualization}
\label{sec:more_visualization}

Figure~\ref{fig:framewise_comparison} presents additional qualitative comparisons
of frame-wise autoregressive video generation. The four examples cover human
activities and animal behavior. For each prompt, we show three sampled frames
from Causal Forcing, Causal Forcing++, and RepForcing. The complete prompts are
provided in Section~\ref{sec:visualization_prompts}.

\begin{figure}[htbp]
\centering
\includegraphics[width=\linewidth]{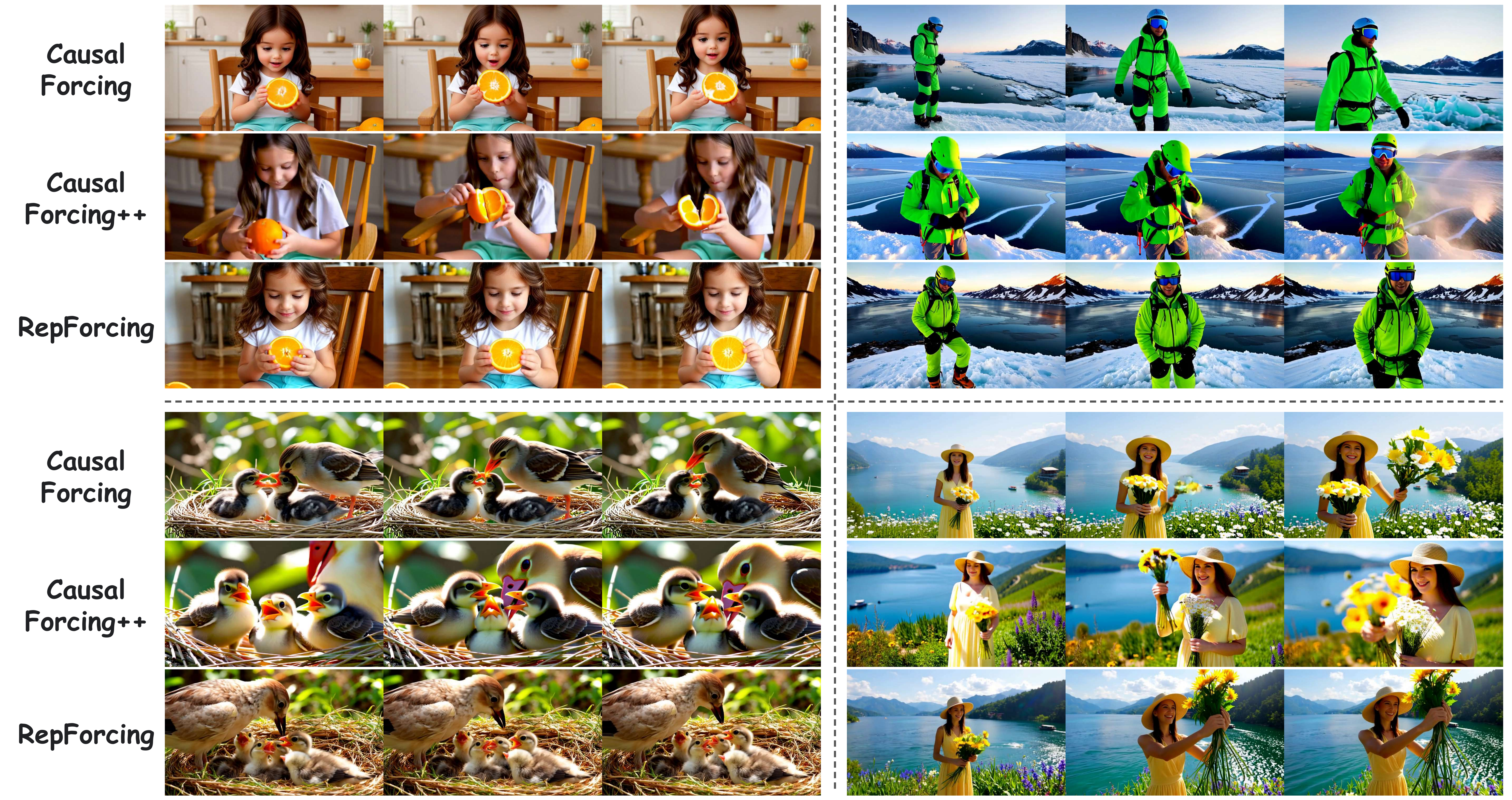}
\caption{\textbf{Qualitative comparison of frame-wise autoregressive models.}
Each quadrant shows three sampled frames for the same prompt across Causal
Forcing, Causal Forcing++, and RepForcing (top to bottom). The examples depict
a girl peeling an orange (top left), a mountaineer overlooking a frozen fjord
(top right), a mother bird feeding her chicks (bottom left), and a woman
holding flowers above a lake (bottom right).}
\label{fig:framewise_comparison}
\end{figure}

\section{Additional experimental details}
\label{sec:appendix_experiments}

\subsection{Training and evaluation protocol}
\label{sec:appendix_protocol}

RepForcing's Stage~1 reuses Causal Forcing's first-stage
$\mathcal D_{\mathrm{Bi}}$, consisting of approximately 3K clean video samples
synthesized by a bidirectional Wan model from VidProM prompts~\citep{zhu2026causal}.
These offline samples supply both the clean targets and teacher-forcing prefixes.
They are neither real captured videos nor the student's online self-rollouts
used during DMD. Causal Forcing additionally generates causal ODE trajectory
pairs with its trained AR teacher; RepForcing does not use that second dataset.
Although Causal Forcing stores bidirectional noisy intermediates when constructing
$\mathcal D_{\mathrm{Bi}}$ for an ODE baseline, RepForcing needs only the clean
video samples and independently adds noise during training.

The main comparison and representation ablations use the six metrics in
Table~\ref{tab:main_comparison}. Instruction Following is VisionReward's
prompt-alignment component, not a human-study score. Qualitative comparisons
use free autoregressive generation rather than teacher-forced reconstructions.
Dynamic Degree is interpreted alongside visual consistency because artifacts
can increase measured motion.

\subsection{Feature extractor settings}
\label{sec:appendix_ablation}

\paragraph{Default Configuration.}
For Stage~1, we extract features from the frozen Wan2.1-14B
DiT~\citep{wan2025wan} using its native bidirectional attention over the full
video. We read the output of the eighth block ($L=8$, one-based) and retain
only the first eight of its 40 blocks, corresponding to relative depth $1/5$. Both predicted and target latents are processed without VAE
decoding or added noise, with the same embedding value $\tau=0$ in the
extractor's native timestep units. This embedding is distinct from the
student's normalized noising time $t_i$. The DMD teacher instead uses the
full network to score noisy videos.

\subsection{Qualitative Comparison of Frame-Wise Generation}
\label{sec:appendix_initialization}

Figure~\ref{fig:framewise_comparison} compares Causal Forcing~\citep{zhu2026causal},
Causal Forcing++~\citep{zhao2026causalforcingpp}, and RepForcing on four prompts,
with three sampled frames per method. The examples cover a girl holding an
orange, a mountaineer beside a frozen fjord, a bird feeding its chicks, and
a woman holding flowers above a lake.

The RepForcing samples show distinct orange segments and clothing folds in
the indoor scene, together with snow texture and mountain contours in the
fjord scene. Compared with the more stylized birds in the baseline examples,
RepForcing renders fine feather texture and nest material. In the lakeside
sequence, the bouquet and surrounding landscape remain recognizable as the
woman raises the flowers. Across the displayed frames, subjects change pose
while retaining identifiable clothing, appearance, and scene elements. These
examples complement the quantitative evaluation with visual evidence of
appearance fidelity in frame-wise generation.

\section{Proofs}
\subsection{Proof of Proposition~\ref{prop:collapse}}
\label{app:proof_collapse}

The proposition concerns expected regression risk under a data distribution
with residual uncertainty in the clean continuation given the prefix and
prompt. It does not characterize memorization of a finite training set.

\begin{proof}
Fix $i$ and $t\in(0,1]$ as in the proposition. For this proof, write
$Y=\mathbf x_0^i$, $H=h_i$, and $U=(\mathbf x_t^i,H)$, and let
$m(U)=\mathbb E[Y\mid U]$. The assumed finite second moment makes the
following squared risks well defined.
For any square-integrable measurable prediction $g(U)$, conditional
expectation gives the orthogonal decomposition
\[
\mathbb E\|g(U)-Y\|_2^2
=\mathbb E\|g(U)-m(U)\|_2^2
 +\mathbb E\|Y-m(U)\|_2^2.
\]
Thus $m(U)$ is the unique minimizer up to almost-sure equality. Applying the
same orthogonality to $Y=m(U)+(Y-m(U))$ yields
\[
\mathbb E\|Y\|_2^2
=\mathbb E\|m(U)\|_2^2
 +\mathbb E\!\left[\operatorname{tr}\operatorname{Cov}(Y\mid U)\right].
\]
Here $\operatorname{Cov}$ denotes conditional covariance and
$\operatorname{tr}$ its matrix trace. For $t>0$, the conditional likelihood
of $\mathbf x_t^i$ given $Y,H$ is Gaussian with covariance $t^2 I$ and is
strictly positive. Hence the posterior law of $Y$ given $U$ is equivalent to its
conditional prior given $H$. Whenever that prior is not a point mass, the
posterior is also not a point mass, so its covariance has strictly positive
trace. This occurs on a set of positive probability by assumption. Therefore
\[
\mathbb E\|m(U)\|_2^2<\mathbb E\|Y\|_2^2.
\]
If $m(U)$ and $Y$ had the same distribution, their finite second moments
would agree, contradicting this inequality. The result concerns the direct
regression prediction at fixed $t$, not the output distribution of a complete
multi-step ODE sampler.
\end{proof}

\subsection{Conditional interpretation of representation regression}
\label{app:feature_regression}

Let $\bar\Phi(X;y)$ denote the concatenated unit-normalized token features
of the frozen video extractor defined in Sec.~\ref{sec:method_loss}.
For a generic observation $U$ containing the text condition $y$, an
unrestricted conditional predictor has an optimal video prediction
\begin{equation}
\label{eq:general_minimizer}
\mu_{\mathrm{rep}}(U)
\in\arg\min_Z\;\mathbb E\!\left[
\|\bar\Phi(Z;y)-\bar\Phi(X_0;y)\|_2^2\,\middle|\,U\right],
\end{equation}
where $Z$ ranges over candidate video latents with defined normalized
features, and a minimum is assumed to exist. Let $F=\bar\Phi(X_0;y)$ and
$m_\Phi(U)=\mathbb E[F\mid U]$. Conditional squared-error decomposition gives
\[
\begin{aligned}
\mathbb E[\|\bar\Phi(Z;y)-F\|_2^2\mid U]
&=\|\bar\Phi(Z;y)-m_\Phi(U)\|_2^2\\
&\quad+\mathbb E[\|F-m_\Phi(U)\|_2^2\mid U].
\end{aligned}
\]
The second term does not depend on $Z$. Thus an optimal prediction has
realizable features closest to $m_\Phi(U)$, which need not correspond to the
coordinate-wise mean of the target latents. The feature mean need not itself
be realizable, and the optimal video need not be unique. This characterizes
the loss preference, not the blockwise optimum of the actual causal student,
whose predictions have distinct inputs $U^i$.

\section{Prompts}
\label{sec:visualization_prompts}

We provide the complete text prompts for the qualitative examples in
Figures~\ref{teaser}, \ref{fig:nfe_sweep}, \ref{fig:baseline}, and
\ref{fig:framewise_comparison}. Within each figure, prompts are listed from
left to right and then from top to bottom across the example groups; repeated
frames and methods for the same prompt are counted only once.

\noindent The prompts used in Figure~\ref{teaser} are as follows:
\begin{promptbox}
\begin{enumerate}[label=\arabic*., leftmargin=*, itemsep=0.65\baselineskip, topsep=0pt]
\item In spring on a coastal cliff carpeted with wildflowers, a fashion model in a white blouse and sea green skirt fills the foreground in cinematic 4K. She turns into the wind, then steps forward with confident posture as the camera glides close to fluttering fabric and natural skin. Behind, turquoise waves shimmer and seabirds arc through bright air. Crisp petal texture, clean highlights, and gentle film grain make the landscape feel fresh, elegant, and hauntingly beautiful.

\item High summer on a desert highway pull‑off, a traveler in a white linen shirt and tan scarf fills the foreground in cinematic 4K. He wipes dust from sunglasses, lifts a bottle to drink, then walks toward a viewpoint as the camera tracks beside him through shimmering heat. Behind him, red mesas glow under harsh sun and a distant storm paints the sky violet. Dust motes, fabric fibers, and skin tones are crisp; gentle film grain grounds the vast, dramatic realism.

\item A cozy reading nook scene in a warm, inviting interior, featuring a playful llama sitting on a soft, plush rug. The llama is surrounded by an array of colorful, cozy pillows and soft blankets, creating a snug and comfortable atmosphere. Golden lighting from a floor lamp casts a warm glow throughout the space, enhancing the cozy ambiance. The llama reads a picture book aloud, using expressive voices to bring the characters to life. The camera captures the llama's animated face and the charming illustrations within the book, with a close-up view of both the reader and the pages.

\item In late autumn in a candle lit chapel, a violinist in a black satin dress fills the foreground in cinematic 4K. She plays a slow phrase, then accelerates into a shimmering run as the camera moves close to bow hair and fingerwork. Outside stained glass, rain taps softly and leaves swirl in the courtyard. Warm flame light paints her face and instrument, with true tones and gentle film grain making the performance hauntingly beautiful.

\item On a winter sunrise above a frozen fjord, a mountaineer in an orange down suit and mirrored goggles dominates the frame in ultra clean UHD. He tightens his harness, then steps onto a ridge as wind lifts powder into glittering sheets. The camera holds close on breath mist and frost on his lashes, then reveals blue ice, dark water cuts, and distant peaks glowing pink. Every seam, strap, and snow crystal reads sharply, with subtle film grain grounding the grandeur.

\item A vibrant and lively vlog-style photo of a corgi in tropical Maui, showcasing the dog energetically filming itself on a sandy beach. The corgi stands on the shore, one paw slightly lifted, with a joyful and curious expression. It wears a colorful collar and a small backpack camera slung over its neck. The background features a lush, palm-fringed beach with clear turquoise waters and a bright blue sky. The photo has a warm, natural lighting effect, capturing the corgi from a slightly elevated angle, emphasizing its playful and adventurous spirit.

\item In spring on a lavender hillside after rain, a woman in a cream blouse and lilac skirt dominates the frame in cinematic 4K. She lifts a bouquet to her face, then walks forward as the camera tracks close to her smile and the shimmer of dew on blossoms. Behind her, clouds break and sunlight warms purple rows stretching to a distant chapel. Sharp textures, true tones, and gentle film grain make the landscape hauntingly beautiful.

\item A serene watercolor painting of a mother duck leading her six ducklings across a tranquil pond. The mother duck has a gentle expression, her feathers glistening in the sunlight, and she frequently glances back to ensure all her ducklings are safely following in a neat little line. The ducklings follow closely behind, their small heads bobbing up and down as they waddle along. The background features a peaceful pond with lily pads and ducks floating nearby, creating a harmonious and natural scene. A mid-shot from a slightly elevated angle captures the mother duck and her ducklings in motion.
\end{enumerate}
\end{promptbox}

\noindent The prompt used in Figure~\ref{fig:nfe_sweep} is as follows:
\begin{promptbox}
\begin{enumerate}[label=\arabic*., leftmargin=*, itemsep=0.65\baselineskip, topsep=0pt]
\item Across a snowfield under pink dawn, a skier in a crimson down jacket, fur-lined hood, and mirrored goggles dominates the frame in HDR 4K with sparkling ice detail. First, she plants her poles and adjusts the strap of a small camera on her chest, breath puffing in the cold. Then she pushes off, carves a wide S-curve through powder, and throws a spray of snow toward the lens as the camera follows low and fast, mountains glowing and the sky vast and stunning.
\end{enumerate}
\end{promptbox}

\noindent The prompts used in Figure~\ref{fig:baseline} are as follows:
\begin{promptbox}
\begin{enumerate}[label=\arabic*., leftmargin=*, itemsep=0.65\baselineskip, topsep=0pt]
\item A miniature 3D render in an octane engine style depicting adorable wool and felt monsters dancing together in a dreamy, bokeh-filled setting. These soft, cuddly creatures, with big expressive eyes and fluffy bodies, are illuminated by gentle, diffused lighting that casts a warm, ethereal glow. The background features a soft, hazy backdrop with a dreamy bokeh effect, adding a cinematic quality to the scene. The monsters are shown from various angles, capturing their playful movements and expressions, creating a charming and enchanting atmosphere. A medium shot with a dynamic camera angle, highlighting the natural and joyful dance of these woolen monsters.

\item On a sunlit train crossing the Swiss Alps, a traveler in a beige coat, knitted scarf, and round sunglasses dominates the shot in 4K with crisp window reflections. First, she wipes condensation from the glass and presses her palm against it, watching snowy peaks slide past. Then she turns, opens the cabin door, and walks into the corridor with a delighted laugh, the camera tracking as valleys open outside, sunlight hits glaciers, and the landscape feels impossibly grand. with water droplets sparkling.

\item In spring under a tunnel of wisteria, a dancer in a pale lavender dress and silver hairpin fills the frame in cinematic 4K. She moves through slow turns, sleeves fluttering, then pauses to let blossoms brush her shoulders as the camera pushes close to her serene eyes. Behind her, a sunlit garden pond mirrors purple clusters and fresh green leaves. Natural tones, crisp petal detail, and soft lens bloom create a dreamy realism that feels delicate and moving.

\item At midsummer in a canyon pool, a climber in an orange technical jacket and chalk dusted hands dominates the frame in HDR UHD. He grips a hold, pulls up with visible effort, then pauses on a ledge to look down at emerald water as the camera tilts to reveal sunlight slicing through rock. Mineral streaks, wet moss, and fabric seams read sharply. True tones and subtle film grain create a thrilling sense of scale.
\end{enumerate}
\end{promptbox}

\noindent The prompts used in Figure~\ref{fig:framewise_comparison} are as follows:
\begin{promptbox}
\begin{enumerate}[label=\arabic*., leftmargin=*, itemsep=0.65\baselineskip, topsep=0pt]
\item A photo in a realistic style depicting a young girl sitting on a wooden chair, peeling an orange with a focused expression. She has long wavy brown hair and clear, warm brown eyes, wearing a simple white blouse and light blue shorts. Her hands are steady as she peels the orange, revealing the segments inside. The background shows a cozy kitchen with a blurred view of a wooden table and some utensils nearby. The lighting is soft and natural, casting gentle shadows. A close-up shot from a slightly downward angle, capturing her detailed facial expression and the orange being peeled.

\item Winter sunrise on a frozen fjord overlook, a mountaineer in a bright down suit and mirrored goggles stands close to camera in ultra‑clean 4K. He tightens his harness, stamps boots to test ice, then steps forward as wind lifts powder into glittering sheets. The camera holds on frost crystals clinging to fabric and eyelashes, then reveals blue ice, dark water cuts, and distant peaks glowing pink. Every seam and snow grain is crisp; subtle film grain grounds the grandeur.

\item A detailed and heartwarming wildlife photograph capturing a mother bird tenderly feeding her chicks in a cozy nest. The mother bird gently places food into the wide-open beaks of her chirping chicks, who eagerly await their meal. Her feathers are soft and fluffy, and she has a gentle, attentive expression. The chicks have small, round heads with wide-open beaks and big, curious eyes. The nest is lined with soft grass and twigs, and the background features a blurred forest scene with dappled sunlight filtering through the leaves. The photo has a natural, documentary style. A close-up shot from a slightly elevated angle, focusing on the interaction between the mother bird and her chicks.

\item In spring on a hillside above a bright lake, a woman in a light yellow dress and straw hat dominates the frame in cinematic 4K. She picks wildflowers, then lifts the bouquet toward the lens as the camera glides closer to her joyful expression. Behind her, water shimmers, small boats drift, and mountains soften into blue haze. Sharp textures, true tones, and gentle film grain create a warm, uplifting beauty.
\end{enumerate}
\end{promptbox}

\end{document}